  \providecommand\BibTeX{{%
    \normalfont B\kern-0.5em{\scshape i\kern-0.25em b}\kern-0.8em\TeX}}}
\newtheorem{proposition}{Proposition}
\newtheorem{remark}{Remark}
\begin{document}
\title{Efficient Multi-View Graph Clustering with Local and Global Structure Preservation}

\author{Yi Wen}
\authornote{Both authors contributed equally to this research.}
\email{wenyi21@nudt.edu.com}
\author{Suyuan Liu}
\authornotemark[1]
\email{suyuanliu@nudt.edu.cn}
\affiliation{%
  \institution{National University of Defense Technology}
  \city{Changsha}
  \country{China}
}

\author{Xinhang Wan}
\email{wanxinhang@nudt.edu.cn}
\affiliation{%
  \institution{National University of Defense
Technology}
  \city{Changsha}
  \country{China}
}

\author{Siwei Wang}
\email{wangsiwei13@nudt.edu.com}
\affiliation{%
  \institution{Intelligent Game and Decision Lab}
  \city{Beijing}
  \country{China}
}

\author{Ke Liang}
\email{liangke200694@126.com}
\affiliation{%
  \institution{National University of Defense
Technology}
  \city{Changsha}
  \country{China}
}

\author{Xinwang Liu}
\authornote{Corresponding author}
\email{xinwangliu@nudt.edu.cn}
\affiliation{%
  \institution{National University of Defense
Technology}
  \city{Changsha}
  \country{China}
}

\author{Xihong Yang}
\email{xihong_edu@163.com}
\author{Pei Zhang}
\email{zhangpei@nudt.edu.cn}
\affiliation{%
  \institution{National University of Defense Technology}
  \city{Changsha}
  \country{China}
}
\renewcommand{\shortauthors}{Yi Wen et al.}

\begin{abstract}

Anchor-based multi-view graph clustering (AMVGC) has received abundant attention owing to its high efficiency and the capability to capture complementary structural information across multiple views. Intuitively, a high-quality anchor graph plays an essential role in the success of AMVGC. However, the existing AMVGC methods only consider single-structure information, i.e., local or global structure, which provides insufficient information for the learning task. To be specific, the over-scattered global structure leads to learned anchors failing to depict the cluster partition well. In contrast, the local structure with an improper similarity measure results in potentially inaccurate anchor assignment, ultimately leading to sub-optimal clustering performance. To tackle the issue, we propose a novel anchor-based multi-view graph clustering framework termed Efficient Multi-View Graph Clustering with Local and Global Structure Preservation (EMVGC-LG). Specifically, a unified framework with a theoretical guarantee is designed to capture local and global information. Besides,  EMVGC-LG jointly optimizes anchor construction and graph learning to enhance the clustering quality. In addition, EMVGC-LG inherits the linear complexity of existing AMVGC methods respecting the sample number, which is time-economical and scales well with the data size. Extensive experiments demonstrate the effectiveness and efficiency of our proposed method.

\end{abstract}

\begin{CCSXML}
<ccs2012>
   <concept>
       <concept_id>10010147.10010257.10010258.10010260.10003697</concept_id>
       <concept_desc>Computing methodologies~Cluster analysis</concept_desc>
       <concept_significance>500</concept_significance>
       </concept>
   <concept>
       <concept_id>10003752.10010070.10010071.10010074</concept_id>
       <concept_desc>Theory of computation~Unsupervised learning and clustering</concept_desc>
       <concept_significance>500</concept_significance>
       </concept>
 </ccs2012>
\end{CCSXML}

\ccsdesc[500]{Computing methodologies~Cluster analysis}
\ccsdesc[500]{Theory of computation~Unsupervised learning and clustering}




\keywords{multi-view graph clustering; large-scale clustering}

\maketitle

\section{INTRODUCTION}
With the advent of the information society, data are usually extracted from multiple sensors in real-world applications \cite{xu2022multi,li2018survey,wang2021survey, AKGR,liang2023structure, LiangTNNLS, ZJPACMMM}. Take the video for example, a clip could be decomposed into the picture, sound records, and text descriptions, which are obtained from a camera, microphone, and producer, respectively \cite{wan2022continual,liu2023self,wan2023autoweighted}. Regarding each modality as a view, how to effectively integrate information from different views has turned out to be an essential task in multi-view scenarios \cite{wan2023onestep,wan2023fast,xu2023adaptive}. Multi-view graph clustering (MVGC), as the classical unsupervised multi-view methods \cite{huang2019auto,wang2015robust,zhang2018generalized, TGC_ML,liuyue_Dink_net,liuyue_HSAN,liuyue_survey}, which can uncover the intrinsic structure of data pairs, is widely adopted in data mining and knowledge discovery \cite{chen2022neuroadaptive,nie2011spectral,nie2020subspace}. In general, MVGC methods achieve desirable achievement with two essential processes \cite{zhang2020adaptive}, \textit{i.e.,} view-specific graph construction, and the consensus graph fusion. 
For instance,  Li et al. \cite{li2019flexible} construct the fused latent representations with structural information fusion from different views in a weighted manner. 

Despite abundant MVGC algorithms proposed in recent years, most of them suffer the cubic computational overhead $\mathcal{O}(n^3)$ and quadratic space overhead $\mathcal{O}(n^2)$ due to the full graph construction and decomposition \cite{wen2020generalized,kang2021structured,wen2021structural}, which hinder their application on large-scale scenarios. As a result,  anchor-based multi-view graph clustering (AMVGC), which selects typical anchors to denote the whole data, is proposed to enhance the algorithm's efficiency. For instance, Li et al. \cite{li2015large} propose an innovative anchor-based multi-view clustering method by fusing the local structure and diverse features. Shi et al. \cite{shi2021fast} generate the indicator matrix by an integrated framework that unifies the anchor graph learning and structure rotation.

Two basic structures, i.e., global and local structures, both play crucial roles in numerous AMVGC studies. As shown in Figure \ref{explanation} (right), the global structure, which captures the relationship between samples and all anchors, usually has a dense correspondence matrix. Meanwhile, the matrix with local structure is always blocky (with an appropriate permutation) owing to only the relationship between samples and one anchor with high similarity being considered. However, the over-scattered global structure leads to learned anchors failing to depict the cluster partition well, while the local structure with an improper similarity measure results in potentially inaccurate anchor assignment. A natural question is whether combining the two structures leads to a better method \cite{AGLI_ML_CIKM}. The effectiveness of similar thought is demonstrated by other studies in vision \cite{wu2021cvt} and language \cite{wu2020lite,peng2022branchformer} tasks. Nonetheless, it is non-trivial to incorporate global and local structures in the MVC field due to the objective equation inconsistency. One pioneer work, \cite{he2003locality} merged global and local structures to learn the similarity of the original data on the kernel space in a single view case. However, their high time expenditure and single-view scenario limit the scalability of the method. To the best of our knowledge, no generalized framework with global and local structure preservation has been proposed in the field of AMVGC. Besides, existing anchor selection usually uses a heuristic sampling strategy and is separated from the graph learning phase, resulting in the clustering performance dependent on the quality of anchor initialization. For instance, Kang et al. \cite{kang2020large} generate fixed anchors in each view by k-means and average the generated anchor graphs into the fusion graph. Because of the randomness and inflexibility of the k-means and sampling strategy, their clustering performance usually exhibits poor stability.
\begin{figure}
    \centering
    \includegraphics[width=0.99\linewidth]{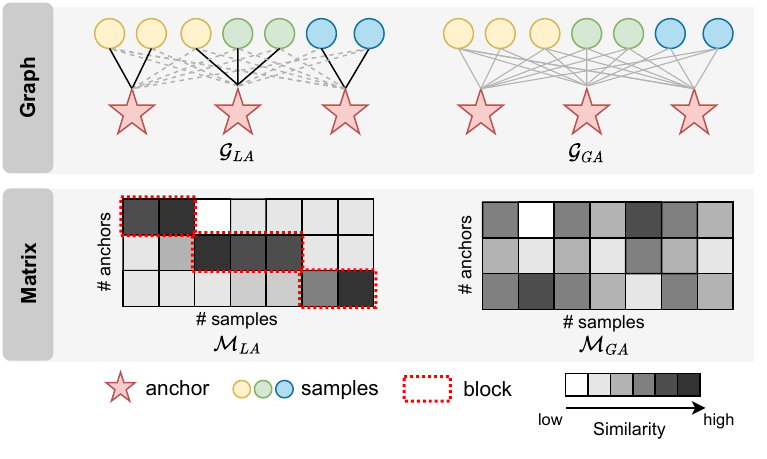}
    \caption{Two types of the anchor graph: local (left) and global (right).  The two types of information often have strong patterns: \textbf{blocky} and \textbf{dense}. $\mathcal{G}_{LA}$ and $\mathcal{M}_{LA}$ denote the visualization and matrix of local anchor graph. $\mathcal{G}_{GA}$ and $\mathcal{M}_{GA}$ denote the visualization and matrix of global anchor graph.}
    \label{explanation}
\end{figure}

To tackle these problems, we design a novel anchor-based multi-view graph clustering framework termed Efficient Multi-View Graph Clustering with Local and Global Structure Preservation (EMVGC-LG). Specifically, a unified framework with a theoretical guarantee is designed to capture local and global information. Besides,  EMVGC-LG jointly optimizes anchor construction and graph learning to enhance the clustering quality. Moreover, we theoretically prove that the proposed paradigm with a global structure can well approximate the local information.
 In addition, EMVGC-LG inherits the linear complexity of existing AMVGC methods respecting the sample number, which is time-economical and scales well with the data size. Meanwhile, a two-step iterative and convergent optimization algorithm is designed in this paper. We summarize the contributions of this paper as follows:
\begin{itemize}
    \item We design an anchor graph learning framework termed Efficient Multi-View Graph Clustering with Local and Global Structure Preservation (EMVGC-LG). With the proven properties, the proposed anchor graph paradigm can not only capture the global structure between data but also well approximate the local structure.

    \item 
     In contrast to existing sampling or fixed anchors, the anchor learning and graph fusion processes are jointly optimized in our framework to enhance the clustering quality.

    \item Extensive experiments on ten benchmark datasets demonstrate the effectiveness and efficiency of our proposed method. 
\end{itemize}


\section{Related Work}
In this section, we present recent research regarding our work, comprising multi-view graph clustering (MVGC) and anchor-based multi-view graph clustering (AMVGC). 

\subsection{Multi-View Graph Clustering}
With the given dataset $\{\mathbf{X}^{(p)}\}_{p=1}^v \in \mathbb{R}^{d_p \times n}$ consisting of  $n$ samples from $v$ views,  the representative multi-view graph clustering (MVGC) model can express as follows:
$$
\begin{aligned}
& \min_{\mathbf{S}^{(p)},\mathbf{S}} \sum_{p=1}^v \ \left\|\mathbf{X}^{(p)}-\mathbf{X}^{(p)}\mathbf{S}^{(p)}\right\|_F^2+\mu  \mathcal{L}\left(\mathbf{S}^{(p)}, \mathbf{S}\right), \\
& \text { s.t. }\left\{\begin{array}{l}
\operatorname{diag}\left(\mathbf{S}^{(p)}\right)=\mathbf{0}, \mathbf{S}^{(p)\top} \mathbf{1}_n=\mathbf{1}_n, \mathbf{S}^{(p)} \geq \mathbf{0}, \\
\operatorname{diag}(\mathbf{S})=\mathbf{0}, \mathbf{S}^{\top} \mathbf{1}_n=\mathbf{1}_n, \mathbf{S} \geq \mathbf{0},
\end{array}\right.
\end{aligned}
$$
where the first term denotes the data self-representation matrix learning module, and the second term represents the structure integration process performed in $\{\mathbf{S}^{(p)}\}_{p=1}^v$ to generate a common $\mathbf{S}, \mu$ represents a trade-off parameter, $\mathcal{L}$ is the regularization term.  After obtaining the fused global graph  $\mathbf{S}$, we need to obtain the spectral embedding $\mathbf{F} \in \mathbb{R}^{n \times k}$:
\begin{equation}
\min_{\mathbf{F}}\operatorname{Tr}\left(\mathbf{F}^{\top} \mathbf{L} \mathbf{F}\right) ,s.t. \mathbf{F}^{\top}\mathbf{F}=\mathbf{I}_{k},
\end{equation}
where $\mathbf{L}$ denotes the graph Laplace operator, which is calculated by $\mathbf{D}-\mathbf{W}$.  $\mathbf{D}$ is a diagonal matrix, and its element is defined as $d_{ii}=\sum_{j=1}^{n} s_{ij}$. $\mathbf{W}$ is the symmetric similarity matrix, which is calculated by $\frac{\mathbf{S}+\mathbf{S}^\top}{2}$. $\mathbf{F}$ is the spectral representation \cite{ng2002spectral,hamilton2020graph}, and the final clustering label is acquired after the $k$-mean algorithm.

Numerous methods have been proposed on the basis of this framework by imposing different constraints \cite{nie2016constrained,yin2018subspace,li2021consensus,jin2023deep, LiangTKDE} or exploring various types of regularization terms \cite{yin2018subspace,yang2019subspace,zhang2017latent}.
Nie et al. \cite{nie2016parameter} generate the optimal Laplacian matrices by the linear combinations of Laplacian basis matrices constructed from multi-view samples. Gao et al. \cite{gao2015multi}  learn independent representations from each view to capture the diverse information and use a consensus clustering structure to ensure intra-view consistency. Zhang et al.  \cite{zhang2015low} take the view-specific subspace representation as a tensor and explore the intersection information from the diverse views by using low-rank constraints.

Nevertheless, such a paradigm cannot prevent the whole graph construction and consequent spectral decomposition process \cite{kang2019robust,wen2019unified}. The time overhead of the methods is at least $\mathcal{O}\left(n^3\right)$ and the space complexity is at least $\mathcal{O}\left(n^2\right)$, largely hindering the large-scale applications.

\subsection{Anchor-based Multi-View Graph Clustering}

In recent years, anchor-based multi-view graph clustering (AMVGC) has received abundant attention owing to its high efficiency. By constructing the relationship between the representative anchors selected and samples, i.e., anchor graph $\mathbf{Z}^{(p)}$,  to recover the full graph, the space complexity of AMVGC can efficiently reduce from $\mathcal{O}(n^3)$ to  $\mathcal{O}(nm)$ \cite{guo2019anchors,ou2020anchor}.

To our knowledge, a data point within a subspace could be calculated as a linear combination of other data from the same subspace, which is known as the self-expression proposition \cite{ren2020simultaneous,kang2019ijcai}. Therefore, AMVGC methods can count on the self-expression property for the purpose of constructing a reliable anchor graph, which we also refer to as the global structure since it utilizes the whole data information. The classical anchor-based multi-view graph clustering with global structure can be shown as follows:
\begin{equation}
\label{global}
\begin{aligned}
& \min_{\mathbf{Z}}  \ \sum_{p=1}^v \left\|\mathbf{X}^{(p)}-\mathbf{A}^{(p)} \mathbf{Z}\right\|_F^2+\mu \left\| \mathbf{Z}\right\|_F^2, \\ & \text { s.t. } \mathbf{Z}^{\top} \mathbf{1}_m=\mathbf{1}_n, \mathbf{Z} \geq \mathbf{0},
\end{aligned}    
\end{equation}
where $\mathbf{A}^{(p)}$ denotes the anchor matrix from the $p$-th view, and $\mathbf{Z}$ denotes the common anchor graph. The final clustering indicates matrix can be calculated from the SVD decomposition of $\mathbf{Z}$  \cite{kang2020large,xie2020joint,xie2019multiview}. Consequently, the computational and space expenditure is reduced from $\mathcal{O}(n^3)$ to $\mathcal{O}(v n m)$, where $n, m$, and $v$ denotes the number of samples, anchors, and views, correspondingly.

Based on a specific similarity measure, local structure only considers the relationship between samples and one anchor with high similarity. The significance of preserving local manifold structure has been well recognized in non-linear model and cluster analysis \cite{ren2020simultaneous,wen2021global} since neighboring samples usually maintain consistent label information. A classical anchor-based multi-view graph clustering with local structure can be calculated by 
\begin{equation}\label{local}
\begin{split}
     & \min_{\mathbf{z}} \sum_{p=1}^v \sum_{i=1}^n\sum_{j=1}^m\left\|\mathbf{x}_i^{(p)}-\mathbf{a}_j^{(p)}\right\|_2^2 \mathbf{z}_{j i}+\mu_1 \sum_{i=1}^n\sum_{j=1}^m \mathbf{z}_{j i}^2 \ ,\\
     &\text { s.t. } \mathbf{z}_i \geq 0, \mathbf{z}_i^{\top} \mathbf{1}=\mathbf{1}\text{ ,}
  \end{split}
\end{equation}
where $\mathbf{x}_i^{(p)}$ is the $i$-th data from the $p$-th view, $\mathbf{a}_j^{(p)}$ is the $j$-th anchor from the $p$-th view, $\mathbf{z}_i$ denotes the $i$th column of $\mathbf{Z}$, and $\mu$ is the balance parameter that constrains $\mathbf{z}_i \geq 0$ and $\mathbf{z}_i^{\top} \mathbf{1}=\mathbf{1}$ guarantee the probabilistic properties of $\mathbf{z}_i$.

\section{Method}
\subsection{Problem Formulation}
Intuitively, a high-quality graph plays an important role in the success of graph-based clustering \cite{chen2020multi,huang2019ultra,li2022high, CCGC, GCC-LDA,liuyue_SCGC,liuyue_DCRN, MGCN}. Most multi-view graph clustering(MVGC) methods usually adopt a self-representation strategy to characterize the samples. Although the global structure is well-explored, the local structure is ignored, which provides insufficient information for the learning task and ultimately leads to sub-optimal clustering performance \cite{huang2022fast,wang2015multi,gao2020tensor}. Several attempts have been made to address the issues \cite{ren2012local}. For instance, He et al. \cite{he2003locality} merged global and local structures to generate the similarity of the original data in the kernel space. Wen et al. \cite{wen2021global} uses low-rank constraints to produce adaptive graphs for the purpose of one-step clustering. However, their high time and space expenditure hinder the application of the method. 

In this paper, we employ an anchor strategy, which selects representative samples to capture the manifold structure. Besides, we adopt the view-independent anchor and generate a common anchor graph to efficiently excavate the complementary and consistent information of multiple views. With the view-specific anchor  $\mathbf{A}^{(p)} \in \mathbb{R}^{d_p \times m}$ and the consistent anchor graph $\mathbf{Z} \in \mathbb{R}^{m \times n}$, the classical anchor-based multi-view graph clustering with global structure Eq.\eqref{global} can be formulated into the following equivalence problem:
\begin{equation}
\begin{aligned}
   & \min_{\mathbf{Z}} \sum_{p=1}^v \operatorname{tr}\left(\mathbf{Z}^{\top} \mathbf{A}^{(p)\top} \mathbf{A}^{(p)} \mathbf{Z}\right) -2 \sum_{p=1}^v \operatorname{tr}\left(\mathbf{X}^{(p)\top} \mathbf{A}^{(p)} \mathbf{Z}\right) +\mu \operatorname{tr}\left(\mathbf{Z}^{\top} \mathbf{Z}\right), \\
& s.t. \quad
	 \mathbf{Z} \geq 0, \mathbf{Z}^{\top} \mathbf{1}=\mathbf{1},
\end{aligned}
\end{equation}
where the $\mu$ is balanced hyperparameter of regularization term.

For local structure preserving, we summarize the paradigm of the traditional AMVGC with local structure and introduce the terms $\operatorname{tr}(\mathbf{A}^{(p)} \operatorname{diag}(\mathbf{Z} \mathbf{1}) \mathbf{A}^{(p)\top})$, which can be mathematically derived from numerous methods, including BIMVC\cite{li2020bipartite}, MVASM\cite{han2020multi}, and MGLSMC\cite{jiang2021multiple}. With the local term, our objective equation becomes:

\begin{equation}
\begin{split}
&\min_{\mathbf{Z}}  \sum_{p=1}^v\operatorname{tr}\left(\mathbf{Z}^{\top} \mathbf{A}^{(p)\top} \mathbf{A}^{(p)} \mathbf{Z}\right) -2 \sum_{p=1}^v\operatorname{tr}\left(\mathbf{X}^{(p)\top} \mathbf{A}^{(p)} \mathbf{Z}\right) \\
& \quad \quad   +\lambda \sum_{p=1}^v\operatorname{tr}\left(\mathbf{A}^{(p)} \operatorname{diag}(\mathbf{Z} \mathbf{1}) \mathbf{A}^{(p)\top}\right)+ \mu \operatorname{tr}\left(\mathbf{Z}^{\top} \mathbf{Z}\right)
\text {, }\\
&s.t. \quad
	 \mathbf{Z} \geq 0, \mathbf{Z}^{\top} \mathbf{1}=\mathbf{1},
  \end{split}
\end{equation}
where $\mathbf{X}^{(p)} \in \mathbb{R}^{d_p \times n}$ represents the orignial data with $d_p$ dimensions in the $p$-th view.

Although the local and global structures are preserved, the quality of our model is highly dependent on the anchor initiation. Most existing anchor-based multi-view graph clustering algorithms use heuristic sampling strategies such as K-means to acquire desirable anchors. However, the anchor construction, as well as the structure learning are separated from each other, which could restrict the clustering capability. Unlike traditional techniques, we learn anchors automatically instead of sampling in this paper. Finally, we can define the optimization for EMVGC-LG as follows:
\begin{equation}
\begin{split}
\label{my formula}
&\min_{\mathbf{A}^{(p)} , \mathbf{Z}}  \sum_{p=1}^v\operatorname{tr}\left(\mathbf{Z}^{\top} \mathbf{A}^{(p)\top} \mathbf{A}^{(p)} \mathbf{Z}\right)  -2 \sum_{p=1}^v\operatorname{tr}\left(\mathbf{X}^{(p)\top} \mathbf{A}^{(p)} \mathbf{Z}\right)\\
& \quad \quad  +\lambda \sum_{p=1}^v\operatorname{tr}\left(\mathbf{A}^{(p)} \operatorname{diag}(\mathbf{Z} \mathbf{1}) \mathbf{A}^{(p)\top}\right) + \mu \operatorname{tr}\left(\mathbf{Z}^{\top} \mathbf{Z}\right)
\text {. }\\
&s.t. \quad
	 \mathbf{Z} \geq 0, \mathbf{Z}^{\top} \mathbf{1}=\mathbf{1}
  \end{split}
\end{equation}

\begin{proposition}
\label{pro}
By setting $\lambda \in \left(0,1\right], \mu = \lambda \mu_1$, minimizing Eq. (\ref{local}) can be
approximated by minimizing Eq. (\ref{my formula}).
\end{proposition}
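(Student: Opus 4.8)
The core of the proof is a single exact algebraic identity: I would first rewrite the entrywise local objective of Eq.~(\ref{local}) as a trace, and then show that the objective of Eq.~(\ref{my formula}) equals $\lambda$ times that trace form plus a nonnegative residual and an additive constant; ``approximated by'' then means that the residual is a controlled regularizer (in fact essentially the global self-representation penalty the method is built around). To carry out the rewrite, expand $\|\mathbf{x}_i^{(p)}-\mathbf{a}_j^{(p)}\|_2^2=\|\mathbf{x}_i^{(p)}\|_2^2-2\mathbf{x}_i^{(p)\top}\mathbf{a}_j^{(p)}+\|\mathbf{a}_j^{(p)}\|_2^2$ inside the triple sum and use the simplex constraint $\mathbf{z}_i^{\top}\mathbf{1}=1$: the first piece sums to the constant $C:=\sum_{p,i}\|\mathbf{x}_i^{(p)}\|_2^2$; the cross piece collapses to $-2\sum_p\operatorname{tr}(\mathbf{X}^{(p)\top}\mathbf{A}^{(p)}\mathbf{Z})$; the $\|\mathbf{a}_j^{(p)}\|_2^2$ piece, now summed over $i$, becomes $\sum_p\sum_j(\mathbf{Z}\mathbf{1})_j\|\mathbf{a}_j^{(p)}\|_2^2=\sum_p\operatorname{tr}(\mathbf{A}^{(p)}\operatorname{diag}(\mathbf{Z}\mathbf{1})\mathbf{A}^{(p)\top})$; and $\mu_1\sum_{i,j}\mathbf{z}_{ji}^2=\mu_1\operatorname{tr}(\mathbf{Z}^{\top}\mathbf{Z})$. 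This is exactly where the local term $\operatorname{tr}(\mathbf{A}^{(p)}\operatorname{diag}(\mathbf{Z}\mathbf{1})\mathbf{A}^{(p)\top})$ in the paper comes from.

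Next I would multiply this trace form by $\lambda$, substitute $\mu=\lambda\mu_1$, and subtract it from the objective of Eq.~(\ref{my formula}). The $\operatorname{diag}(\mathbf{Z}\mathbf{1})$ terms and the $\operatorname{tr}(\mathbf{Z}^{\top}\mathbf{Z})$ terms cancel identically, leaving $\sum_p\operatorname{tr}(\mathbf{Z}^{\top}\mathbf{A}^{(p)\top}\mathbf{A}^{(p)}\mathbf{Z})-2(1-\lambda)\sum_p\operatorname{tr}(\mathbf{X}^{(p)\top}\mathbf{A}^{(p)}\mathbf{Z})-\lambda C$, and completing the square in $\mathbf{A}^{(p)}\mathbf{Z}$ turns this into $\sum_p\|\mathbf{A}^{(p)}\mathbf{Z}-(1-\lambda)\mathbf{X}^{(p)}\|_F^2-(1-\lambda+\lambda^2)C$. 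Writing $G$ and $L$ for the objectives of Eq.~(\ref{my formula}) and Eq.~(\ref{local}), this is the identity
\[
  G = \lambda\,L + \sum_{p=1}^{v}\|\mathbf{A}^{(p)}\mathbf{Z}-(1-\lambda)\mathbf{X}^{(p)}\|_F^2 - (1-\lambda+\lambda^2)\,C .
\]

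Because $C$ is independent of $(\{\mathbf{A}^{(p)}\},\mathbf{Z})$ and $\lambda>0$, minimizing $G$ over the feasible set is equivalent to minimizing $L+R$ with $R:=\tfrac{1}{\lambda}\sum_p\|\mathbf{A}^{(p)}\mathbf{Z}-(1-\lambda)\mathbf{X}^{(p)}\|_F^2\ge0$. The hypothesis $\lambda\in(0,1]$ keeps $1-\lambda\in[0,1)$; in particular at $\lambda=1$ the residual collapses to the pure global self-representation penalty $\sum_p\|\mathbf{A}^{(p)}\mathbf{Z}\|_F^2$, and, applying Jensen's inequality columnwise to $\mathbf{Z}$ (using $\mathbf{z}_i\ge0$, $\mathbf{z}_i^{\top}\mathbf{1}=1$), $\sum_p\|\mathbf{A}^{(p)}\mathbf{Z}\|_F^2\le\sum_p\operatorname{tr}(\mathbf{A}^{(p)}\operatorname{diag}(\mathbf{Z}\mathbf{1})\mathbf{A}^{(p)\top})$, so $R$ is dominated by a quantity already present inside $L$. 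Hence $G$ is a global-structure-regularized surrogate for $L$, i.e.\ minimizing Eq.~(\ref{my formula}) approximately minimizes Eq.~(\ref{local}), which is the claim; and the specific coupling $\mu=\lambda\mu_1$ is precisely what forces the two regularizers to align so the decomposition holds.

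The algebra in the first two steps is routine; the genuinely delicate step is the last one --- upgrading ``$G$ equals $\lambda L$ plus a residual'' to ``$G$ approximates $\lambda L$''. The residual $R$ is in general of the same order in $n$ as $L$ itself, so it is not literally negligible, and the real content of the proposition is the exact identity above. I would therefore present the conclusion around that identity (and, if a quantitative statement is wanted, bound $R$ explicitly under a normalization such as $\|\mathbf{x}_i^{(p)}\|_2\le1$, $\|\mathbf{a}_j^{(p)}\|_2\le1$, which makes each summand of $R$ at most a constant) rather than claiming an unconditional error estimate.
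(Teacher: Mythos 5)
Your proof is correct and follows essentially the same route as the paper: both rewrite the local objective of Eq.~(\ref{local}) in trace form via the simplex constraint $\mathbf{Z}^{\top}\mathbf{1}=\mathbf{1}$ and then exhibit the objective of Eq.~(\ref{my formula}) (up to an additive constant) as $\lambda$ times that local objective plus a nonnegative residual, so that Eq.~(\ref{my formula}) upper-bounds $\lambda\,\cdot$\,Eq.~(\ref{local}). The only difference is cosmetic --- the paper splits the residual as $(1-\lambda)\sum_p\|\mathbf{X}^{(p)}-\mathbf{A}^{(p)}\mathbf{Z}\|_F^2+\lambda\sum_p\operatorname{tr}(\mathbf{Z}^{\top}\mathbf{A}^{(p)\top}\mathbf{A}^{(p)}\mathbf{Z})$ (which is where both endpoints of $\lambda\in(0,1]$ are used), whereas you complete the square into $\sum_p\|\mathbf{A}^{(p)}\mathbf{Z}-(1-\lambda)\mathbf{X}^{(p)}\|_F^2$ --- and your closing caveat that the residual is not actually negligible is a fair and more candid reading of what ``approximated by'' can mean here.
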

\begin{proof}
By adding the item  $\sum_{p=1}^v  \operatorname{tr}( \mathbf{X}^{(p)\top} \mathbf{X}^{(p)})$ not related to the optimized variables, Eq. (\ref{my formula}) can be transformed into the following equivalence problem:
$$
\begin{aligned}
 & Eq. (\ref{my formula})  \Leftrightarrow \\
 & \lambda \sum_{p=1}^v \left( \operatorname{tr} \left( \mathbf{X}^{(p)\top} \mathbf{X}^{(p)}\right) -2 \operatorname{tr}\left(\mathbf{X}^{(p)\top} \mathbf{A}^{(p)} \mathbf{Z}\right) + \operatorname{tr}(\mathbf{M}^{(p)} \mathbf{Z}) \right)
\\
& + \left(1- \lambda\right) \sum_{p=1}^v\left(\operatorname{tr}\left(\mathbf{Z}^{\top} \mathbf{A}^{(p)\top} \mathbf{A}^{(p)} \mathbf{Z}\right)  - 2 \operatorname{tr}\left(\mathbf{X}^{(p)\top} \mathbf{A}^{(p)} \mathbf{Z}\right)  \right.
\\ 
&  \left.  + \operatorname{tr}\left(\mathbf{X}^{(p)\top} \mathbf{X}^{(p)}\right) \right) +  \lambda \sum_{p=1}^v\operatorname{tr}\left(\mathbf{Z}^{\top} \mathbf{A}^{(p)\top} \mathbf{A}^{(p)} \mathbf{Z}\right)  +\mu \operatorname{tr}\left(\mathbf{Z}^{\top} \mathbf{Z}\right) 
\\
& =  \lambda \sum_{p=1}^v\operatorname{tr}\left(\mathbf{Z}^{\top} \mathbf{A}^{(p)\top} \mathbf{A}^{(p)} \mathbf{Z}\right) +  \left(1-\lambda\right) \sum_{p=1}^v \left\|\mathbf{X}^{(p)}-\mathbf{A}^{(p)} \mathbf{Z}\right\|_F^2 
\\
&+ \lambda \sum_{p=1}^v \left( \operatorname{tr}\left(\mathbf{X}^{(p)\top} \mathbf{X}^{(p)}\right) -2 \operatorname{tr}\left(\mathbf{X}^{(p)\top} \mathbf{A}^{(p)} \mathbf{Z}\right) + \operatorname{tr}(\mathbf{M}^{(p)} \mathbf{Z}) \right) \\
& +\lambda \mu_1 \operatorname{tr}\left(\mathbf{Z}^{\top} \mathbf{Z}\right) \\
     & \geq \quad \lambda \sum_{i=1}^n\sum_{j=1}^m \left(\sum_{p=1}^v \left\|\mathbf{x}_i^{(p)}-\mathbf{a}_j^{(p)}\right\|_2^2 \mathbf{z}_{j i}+\mu_1 \mathbf{z}_{j i}^2 \right) \Leftrightarrow Eq. (\ref{local}) \\
\end{aligned}
$$
where $\operatorname{tr}(\mathbf{M}^{(p)} \mathbf{Z}) = \operatorname{tr}\left(\mathbf{A}^{(p)} \operatorname{diag}(\mathbf{Z} \mathbf{1}) \mathbf{A}^{(p)\top}\right)$, $\mathbf{M}^{(p)}_{i,:}=\left[\mathbf{q}_1^{(p)} , \cdots , \right.$ $\left.\mathbf{q}_m^{(p)}\right]$, $\mathbf{q}_j^{(p)} = \mathbf{A}^{(p)\top}_{:,j} \mathbf{A}^{(p)}_{:,j}.$
\end{proof}
\begin{remark}
Proposition \ref{pro} illustrates that the proposed Eq.\eqref{my formula} is an upper bound of Eq.\eqref{local}. Therefore,  our model can approximate the local structure by minimizing Eq.\eqref{my formula}. The tradeoff
between the global and local structure preservation is adjusted by the parameter $\lambda$.
\end{remark}
\subsection{Optimization}
When all the variables are considered together, the optimization of  Eq.(\ref{my formula})
becomes a nonconvex problem, and we address it with an alternating optimization algorithm. 

\subsubsection{Optimization of Anchor Matrices \texorpdfstring{$\{\mathbf{A}^{(p)}\}_{p=1}^v$}{}}
When $\{\mathbf{Z}^{(p)}\}_{p=1}^v$ is fixed, the optimization for $\{\mathbf{A}^{(p)}\}_{p=1}^v$ can be written as follows:
\begin{equation}\label{opt a}
\begin{aligned}
 \min_{\mathbf{A}^{(p)}} 
  & \sum_{p=1}^{v} \operatorname{tr} \left(\mathbf{A}^{(p)} \mathbf{Z} \mathbf{Z}^{\top} \mathbf{A}^{(p)\top}\right)-2 \operatorname{tr}\left(\mathbf{Z} \mathbf{X}^{(p)\top} \mathbf{A}^{(p)}\right) \\ 
 & \quad +\lambda \operatorname{tr}\left(\mathbf{A}^{(p)} \operatorname{diag}(\mathbf{Z} \mathbf{1}) \mathbf{A}^{(p)\top}\right).
\end{aligned}
\end{equation}

Considering the optimization of each $\mathbf{A}_v$ is independent of the corresponding view. Therefore, we can optimize Eq. (\ref{opt a}) by solving the problems as follows,

\begin{equation}
\min_{\mathbf{A}^{(p)}} 2 \mathbf{A}^{(p)} \mathbf{Z} \mathbf{Z}^{\top}+2 \lambda \mathbf{A}^{(p)} \operatorname{diag}(\mathbf{Z} \mathbf{1})- 2 \mathbf{X}^{(p)} \mathbf{Z}^{\top}.
\end{equation}

Intuitively, $\mathbf{A}^{(p)}$ can be calculated by:

\begin{equation}
\mathbf{A}^{(p)}= \mathbf{X}^{(p)} \mathbf{Z}^{\top}\left(\mathbf{Z} \mathbf{Z}^{\top}+\lambda \operatorname{diag}(\mathbf{Z} \mathbf{1})\right)^{-1}.
\end{equation}

\subsubsection{Optimization of Consistent Anchor Graph \texorpdfstring{$\mathbf{Z}$}{}}
When $\{\mathbf{A}^{(p)}\}_{p=1}^v$ is fixed, the optimization for $\mathbf{Z}$ can be written as follows:
\begin{equation}
\label{opt z}
\begin{aligned}
\min_{\mathbf{Z}}  &\quad \sum_{p=1}^{v}\operatorname{tr}\left( \mathbf{Z}^{\top} \mathbf{A}^{(p)\top} \mathbf{A}^{(p)} \mathbf{Z}\right)+\lambda \sum_{p=1}^{v}\operatorname{tr}(\mathbf{M}^{(p)} \mathbf{Z})\\
& -2 \sum_{p=1}^{v}\operatorname{tr}\left(\mathbf{X}^{(p)\top} \mathbf{A}^{(p)} \mathbf{Z}\right) + \mu \operatorname{tr}\left(\mathbf{Z}^{\top} \mathbf{Z}\right), \\
& s.t.\mathbf{Z} \geq 0, \mathbf{Z}^{\top} \mathbf{1}=\mathbf{1},
\end{aligned}
\end{equation}
where $\mathbf{M}^{(p)}_{i,:}=\left[\mathbf{q}_1^{(p)} , \cdots , \mathbf{q}_m^{(p)}\right]$
, $\mathbf{q}_j^{(p)} = \mathbf{A}^{(p)\top}_{:,j} \mathbf{A}^{(p)}_{:,j}.$

As mentioned earlier, the above optimization process of $\mathbf{Z}$ can be easily expressed as the below  Quadratic Programming (QP) problem,

\begin{equation}
\begin{aligned}
&\min \frac{1}{2} \mathbf{Z}_{:, i}^{\top} \mathbf{Q} \mathbf{Z}_{:, i}+f_i^{\top} \mathbf{Z}_{:, i}, \\
&\text { s.t. } \mathbf{Z}_{:, i}^{\top} \mathbf{1}=1, \mathbf{Z}_{:, i} \geq 0,
\end{aligned}
\end{equation}
where $\mathbf{Q}= \sum_{p=1}^{v}\mathbf{A}^{(p)\top} \mathbf{A}^{(p)} + \mu \mathbf{I}$,$f_i=- \sum_{p=1}^{v}\left({\mathbf{X}_{:, i}^{(p)}}^{\top}\mathbf{A}^{(p)}+ \frac{\lambda}{2} \mathbf{M}_{i,:}^{(p)} \right)$.

Because each column of $\mathbf{Z}$ is a $m$-dimensional vector, the time complexity of this sub-problem is $\mathcal{O}(nm^3)$.
The entire optimization process for solving Eq.(\ref{my formula}) is summarized in Algorithm 1.

\begin{algorithm}[t]
	\renewcommand{\algorithmicrequire}{\textbf{Input:}}
	\renewcommand{\algorithmicensure}{\textbf{Output:}}
	\caption{Efficient Multi-View Graph Clustering with Local and Global Structure Preservation (EMVGC-LG)}
	\label{my alg}
	\begin{algorithmic}[1]
		\REQUIRE multi-views dataset $\{\mathbf{X}^{(p)}\}_{p=1}^{v}$, the number of cluster $k$.
		
		\STATE Initialize  $\{\mathbf{A}^{(p)}\}_{p=1}^{v}$. 
         \REPEAT
		
		\STATE Obtain $\{\mathbf{A}^{(p)}\}_{p=1}^{v}$ with Eq. (\ref{opt a}).
		\STATE Obtain $\mathbf{Z}$ with Eq. (\ref{opt z}).
  \UNTIL{converged}
\STATE Obtain $\mathbf{H}$ by performing SVD on $\mathbf{Z}$
  \ENSURE Perform k-means on $\mathbf{H}$ to obtain discrete labels.
	\end{algorithmic}  
	\label{algo_whole}
\end{algorithm}

\subsection{Discussions}
\subsubsection{Convergence}
As the iterations proceed, two variables of the above optimization procedure will be separately addressed. Since each subproblem has reached the global optimum, the value of the EMVGC-LG function will monotonically decrease and finally reach convergence. Moreover, since the lower bound of the objective function can be easily proved to be $- \sum_{p=1}^v  \operatorname{tr}( \mathbf{X}^{(p)\top} \mathbf{X}^{(p)})$ (by Proposition \ref{pro}), our proposed EMVGC-LG is proven to converge to the local optimum.

\subsubsection{Space Complexity}
In this paper, the primary memory overhead of our approach is the matrix $\mathbf{Z} \in \mathbb{R}^{m \times n}$ and $\mathbf{A}^{(p)} \in \mathbb{R}^{d_p \times m}$. As a result, the space complexity of EMVGC-LG is $m\left(n+d\right)$, where $d = \sum_{p=1}^v d_p$. $m \ll n$ and $d \ll n$ are within our settings. Therefore, the space complexity of EMVGC-LG is $\mathcal{O}(n)$. 

\subsubsection{Time Complexity}
The time complexity of EMVGC-LG is composed of two optimization steps, as previously mentioned. The time complexity of updating $\{\mathbf{A}^{(p)}\}_{p=1}^v$ is $\mathcal{O}\left((nmd + m^3)v\right)$.  When analytically obtaining $\mathbf{Z}$, it costs $\mathcal{O}(nm^3)$ for all columns. Therefore, the total time cost of the optimization process is $\mathcal{O}\left(n(mdv + m^3) + m^3v\right)$. Consequently, the computational complexity of EMVGC-LG is $\mathcal{O}(n)$,  which is linearly related to the number of data.

\section{Experiment}
In this section, we perform numerous experiments to assess our proposed EMVGC-LG. Concretely, we discuss the clustering quality on synthetic and real datasets, the evolution of the objective values, the running time, the sensitivity of the parameters, and the ablation study. Our code is accessible on the \url{https://github.com/wy1019/EMVGC-LG}.

\subsection{Synthetic Datasets}

    In order to visualize the different influences of local and global structure, we performed experiments on a synthetic two-view two-dimensional dataset containing 500 samples extracted from five clusters produced by a Gaussian function. From Figure \ref{our:synthetic}, we can observe that incorporating local and global information can effectively enhance the clustering performance. Compared with a singular structure, our strategy improves the performance by 11.85\% and 26.08\%, respectively, and yields a clearer partition of clusters.

\begin{table}[t]
\caption{Benchmark datasets}
\renewcommand\arraystretch{1}
\tabcolsep=0.2cm
	\centering
    \scalebox{1.05}{
\begin{tabular}{cccc}
\toprule
Datasets      & Samples & Views & Clusters \\ \hline
Reuters12     & 1200    & 5     & 6        \\
Flower17      & 1360    & 7     & 17       \\
Mfeat         & 2000    & 2     & 10       \\
VGGFace50     & 16936   & 4     & 50       \\
Caltech256    & 30607   & 4     & 257      \\
YouTubeFace10 & 38654   & 4     & 10       \\
Cifar10       & 60000   & 4     & 9        \\
Cifar100      & 60000   & 4     & 99       \\
YouTubeFace20 & 63896   & 4     & 20       \\
YouTubeFace50 & 126054  & 4     & 50       \\
\bottomrule
\end{tabular}}
\label{benchmark_data}
\end{table}

\subsection{Real-world Datasets}
Ten extensively available datasets were used to assess the validity of the proposed algorithms. The information of the datasets are as follows, including Reuters12\footnote{\url{https://archive.ics.uci.edu/ml/datasets/reuters-21578+text+categorization+collection}}, Flower17\footnote{\url{https://www.robots.ox.ac.uk/ vgg/data/flowers/17/}}, 
Mfeat\footnote{\url{http://www.svcl.ucsd.edu/projects/crossmodal/}}, 
VGGFace50\footnote{\url{https://www.robots.ox.ac.uk/ vgg/data/vgg_face/}}, 
Caltech256\footnote{\url{https://authors.library.caltech.edu/7694/}}, 
YouTubeFace10\footref{fn:1}, 
Cifar10\footref{fn:2}, Cifar100\footnote{\url{https://www.cs.toronto.edu/ kriz/cifar.html}\label{fn:2}}, YouTubeFace20\footref{fn:1}, 
and YouTubeFace50\footnote{\url{https://www.cs.tau.ac.il/ wolf/ytfaces/}\label{fn:1}}. The detailed information is listed in Table \ref{benchmark_data}. Specifically, 
Reuters12 is a subset of Reuters, which contains 1200 documents described in five languages, including English, France, German, Italian, and Spanish.
Flower17  is a flower dataset with 80 images for each class. 
Mfeat contains 2000 images of handwritten numbers from 0 to 9. Each sample is expressed by six different feature sets,  i.e., 216-dimensional FAC, 76-dimensional FOU,  64-dimensional KAR, 6 MORs, 240-dimensional Pix, and 47-dimensional ZER. 
VGGFace50 is derived from VGGFace. 
Caltech256  contains 30607 images spanning 257 object categories. 
Features were extracted from 60,000 tiny color images of 10 categories and 100 categories in four views of Cifar10 and Cifar100 by ResNet18, ResNet50, and DenseNet121.
YoutubeFace10, YoutubeFace20, and YouTubeFace50 are face video datasets withdrawn from YouTube. 

\begin{table*}[t]
\caption{Empirical evaluation and comparison of EMVGC-LG with ten baseline methods on ten benchmark datasets}
\label{results}
\selectfont 
\centering
\renewcommand\arraystretch{1}
\tabcolsep=0.05cm
\scalebox{1}{
\begin{tabular}{cccccccccccc}
\toprule
Dataset       & RMKM                              & AMGL       & FMR                               & PMSC       & BMVC                              & LMVSC                             & SMVSC                             & SFMC       & FMCNOF     & FPMVS-CAG  & Proposed                          \\ \hline
\multicolumn{12}{c}{ACC (\%)}                                                                                                                                                                                                                                                                          \\ \hline
Reuters12     & 44.00±0.00                        & 18.12±0.42 & 52.35±1.90                        & 27.88±0.75 & 50.33±0.00                        & 47.92±2.98                        & {\color[HTML]{4F81BD} 55.69±3.06} & 17.08±0.00 & 28.58±0.00 & 45.82±3.29 & {\color[HTML]{FF0000} 60.74±2.01} \\
Flower17      & 23.24±0.00                        & 9.70±1.53  & 33.43±1.75                        & 20.82±0.74 & 26.99±0.00                        & {\color[HTML]{4F81BD} 37.12±1.86} & 27.13±0.84                        & 7.57±0.00  & 17.43±0.00 & 25.99±1.83 & {\color[HTML]{FF0000} 53.28±2.51} \\
Mfeat         & 80.80±0.00                        & 71.42±5.42 & 64.53±2.28                        & 66.41±4.40 & 65.80±0.00                        & {\color[HTML]{4F81BD} 82.86±6.95} & 65.57±3.99                        & 56.90±0.00 & 56.95±0.00 & 65.11±4.18 & {\color[HTML]{FF0000} 88.53±4.87} \\
VGGFace50     & 8.23±0.00                         & 2.95±0.35  & O/M                               & O/M        & 10.30±0.00                        & 10.56±0.26                        & {\color[HTML]{4F81BD} 13.36±0.60} & 3.64±0.00  & 5.51±0.00  & 12.06±0.36 & {\color[HTML]{FF0000} 15.29±0.74} \\
Caltech256    & 9.87±0.00                         & O/M        & O/M                               & O/M        & 8.63±0.00                         & 9.57±0.17                         & {\color[HTML]{4F81BD} 10.54±0.15} & O/M        & 2.70±0.00  & 8.78±0.07  & {\color[HTML]{FF0000} 11.70±0.36} \\
YouTubeFace10 & {\color[HTML]{4F81BD} 74.88±0.00} & O/M        & O/M                               & O/M        & 58.58±0.00                        & 74.48±5.30                        & 72.93±3.96                        & 55.80±0.00 & 43.42±0.00 & 67.09±5.70 & {\color[HTML]{FF0000} 79.55±4.57} \\
CIFAR10       & O/M                               & O/M        & O/M                               & O/M        & 27.81±0.00                        & 29.02±0.81                        & {\color[HTML]{4F81BD} 29.11±1.35} & 10.02±0.00 & 20.53±0.00 & 26.89±0.71 & {\color[HTML]{FF0000} 31.14±1.07} \\
CIFAR100      & O/M                               & O/M        & O/M                               & O/M        & 8.32±0.00                         & {\color[HTML]{4F81BD} 9.53±0.15}  & 8.34±0.17                         & 1.18±0.00  & 3.66±0.00  & 7.29±0.11  & {\color[HTML]{FF0000} 10.96±0.35} \\
YouTubeFace20 & O/M                               & O/M        & O/M                               & O/M        & 57.39±0.00                        & {\color[HTML]{4F81BD} 67.26±3.53} & 67.13±4.20                        & O/M        & 38.61±0.00 & 63.08±3.79 & {\color[HTML]{FF0000} 72.79±2.73} \\
YouTubeFace50 & O/M                               & O/M        & O/M                               & O/M        & 66.00±0.00                        & {\color[HTML]{4F81BD} 68.32±2.45} & O/M                               & O/M        & 21.66±0.00 & 64.24±2.97 & {\color[HTML]{FF0000} 70.52±2.62} \\ \hline
\multicolumn{12}{c}{NMI (\%)}                                                                                                                                                                                                                                                                          \\ \hline
Reuters12     & 26.83±0.00                        & 3.46±0.76  & 31.43±0.84                        & 14.76±0.58 & 27.25±0.00                        & 27.86±1.49                        & {\color[HTML]{4F81BD} 32.46±1.72} & 12.61±0.00 & 7.15±0.00  & 24.59±3.20 & {\color[HTML]{FF0000} 35.80±1.03} \\
Flower17      & 22.07±0.00                        & 10.25±4.01 & 30.65±0.91                        & 19.13±0.48 & 25.62±0.00                        & {\color[HTML]{4F81BD} 35.37±1.10} & 25.78±0.76                        & 7.87±0.00  & 14.68±0.00 & 25.81±1.59 & {\color[HTML]{FF0000} 51.77±1.72} \\
Mfeat         & 82.28±0.00                        & 77.12±2.19 & 66.21±0.73                        & 63.29±1.63 & 59.39±0.00                        & {\color[HTML]{4F81BD} 82.46±2.79} & 57.99±2.11                        & 68.15±0.00 & 55.47±0.00 & 57.77±2.73 & {\color[HTML]{FF0000} 82.90±2.78} \\
VGGFace50     & 9.66±0.00                         & 2.04±0.50  & O/M                               & O/M        & 13.48±0.00                        & 12.64±0.28                        & {\color[HTML]{4F81BD} 16.21±0.49} & 1.63±0.00  & 4.74±0.00  & 14.74±0.55 & {\color[HTML]{FF0000} 18.71±0.76} \\
Caltech256    & 31.01±0.00                        & O/M        & O/M                               & O/M        & 31.83±0.00                        & {\color[HTML]{4F81BD} 31.96±0.11} & 28.27±0.24                        & O/M        & 1.60±0.00   & 22.97±0.21 & {\color[HTML]{FF0000} 34.78±0.17} \\
YouTubeFace10 & {\color[HTML]{4F81BD} 78.83±0.00} & O/M        & O/M                               & O/M        & 54.66±0.00                        & 77.74±2.03                        & 78.57±2.80                        & 77.46±0.00 & 39.15±0.00 & 76.11±3.06 & {\color[HTML]{FF0000} 82.21±2.97} \\
CIFAR10       & O/M                               & O/M        & O/M                               & O/M        & {\color[HTML]{4F81BD} 17.90±0.00} & 17.84±0.53                        & 16.00±0.99                        & 0.16±0.00  & 10.33±0.00 & 15.45±0.99 & {\color[HTML]{FF0000} 18.22±0.66} \\
CIFAR100      & O/M                               & O/M        & O/M                               & O/M        & 15.05±0.00                        & {\color[HTML]{4F81BD} 15.40±0.18} & 14.40±0.20                        & 0.53±0.00  & 7.04±0.00  & 13.62±0.16 & {\color[HTML]{FF0000} 18.42±0.33} \\
YouTubeFace20 & O/M                               & O/M        & O/M                               & O/M        & 70.65±0.00                        &  76.78±1.34 & {\color[HTML]{4F81BD}78.36±2.39}                        & O/M        & 45.45±0.00 & 74.30±1.95 & {\color[HTML]{FF0000} 80.57±1.60} \\
YouTubeFace50 & O/M                               & O/M        & O/M                               & O/M        & 81.90±0.00                        & {\color[HTML]{4F81BD} 82.43±0.78} & O/M                               & O/M        & 43.03±0.00 & 82.08±1.07 & {\color[HTML]{FF0000} 84.17±0.83} \\ \hline
\multicolumn{12}{c}{Fscore (\%)}                                                                                                                                                                                                                                                                       \\ \hline
Reuters12     & 32.18±0.00                        & 28.36±0.01 &  38.62±0.43 & 27.25±0.41 & 35.49±0.00                        & 34.80±1.16                        & {\color[HTML]{4F81BD}39.12±1.10}                        & 27.71±0.00 & 24.11±0.00 & 34.14±1.23 & {\color[HTML]{FF0000} 43.01±1.29} \\
Flower17      & 14.35±0.00                        & 11.49±0.34 & 20.09±0.81                        & 12.33±0.36 & 16.61±0.00                        & {\color[HTML]{4F81BD} 23.99±0.95} & 17.53±0.21                        & 10.94±0.00 & 13.93±0.00 & 17.29±0.39 & {\color[HTML]{FF0000} 38.41±1.74} \\
Mfeat         & 76.64±0.00                        & 65.74±7.46 & 58.89±1.02                        & 55.68±2.20 & 50.87±0.00                        & {\color[HTML]{4F81BD} 78.11±5.34} & 53.09±2.65                        & 45.62±0.00 & 45.52±0.00 & 52.50±3.10 & {\color[HTML]{FF0000} 80.93±4.85} \\
VGGFace50     & 3.69±0.00                         & 3.91±0.02  & O/M                               & O/M        & 5.10±0.00                         & 5.09±0.15                         & {\color[HTML]{4F81BD} 6.35±0.18}  & 4.16±0.00  & 4.36±0.00  & 6.10±0.07  & {\color[HTML]{FF0000} 7.93±0.41}  \\
Caltech256    & {\color[HTML]{4F81BD} 7.28±0.00}  & O/M        & O/M                               & O/M        & 6.25±0.00                         & 6.00±0.28                         & 5.37±0.20                         & O/M        & 1.17±0.00  & 3.22±0.02  & {\color[HTML]{FF0000} 8.70±0.69}  \\
YouTubeFace10 & 66.93±0.00                        & O/M        & O/M                               & O/M        & 52.53±0.00                        & {\color[HTML]{4F81BD} 68.93±3.19} & 68.34±5.78                        & 61.25±0.00 & 32.88±0.00 & 66.10±5.06 & {\color[HTML]{FF0000} 75.38±4.44} \\
CIFAR10       & O/M                               & O/M        & O/M                               & O/M        & {\color[HTML]{FF0000} 21.64±0.00} & 20.34±0.43                        & 19.70±0.55                        & 18.18±0.00 & 18.48±0.00 & 20.58±0.34 & {\color[HTML]{4F81BD} 20.85±0.54} \\
CIFAR100      & O/M                               & O/M        & O/M                               & O/M        & {\color[HTML]{4F81BD} 4.37±0.00}  & 3.69±0.05                         & {\color[HTML]{FF0000}4.47±0.06}                         & 1.98±0.00  & 2.56±0.00  & 3.78±0.01  & {\color[HTML]{4F81BD} 4.36±0.15}  \\
YouTubeFace20 & O/M                               & O/M        & O/M                               & O/M        & 49.04±0.00                        & {\color[HTML]{4F81BD} 62.43±2.91} & 61.68±5.99                        & O/M        & 25.84±0.00 & 57.81±4.00 & {\color[HTML]{FF0000} 68.22±3.48} \\
YouTubeFace50 & O/M                               & O/M        & O/M                               & O/M        & 57.09±0.00                        & {\color[HTML]{4F81BD} 62.49±2.45} & O/M                               & O/M        & 15.67±0.00 & 56.89±3.18 & {\color[HTML]{FF0000} 63.34±1.78} \\
\bottomrule
\end{tabular}}
\end{table*}

\begin{figure*}[t]
\begin{center}
{
\centering
\subfloat[Local (ACC = 73.08\%)]{\includegraphics[height = 130 pt, width = 140 pt]{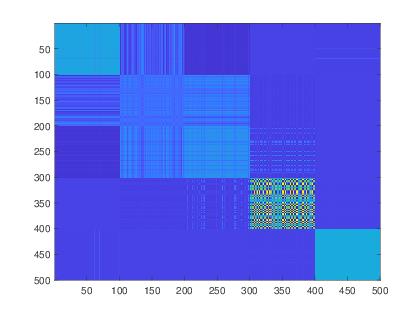}}
\subfloat[Global (ACC = 88.15\%)]{\includegraphics[height = 130 pt, width = 140 pt]{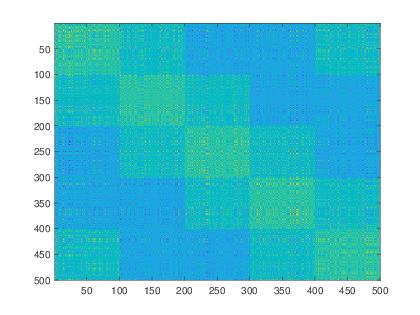}}
\subfloat[Ours (ACC = 100\%)]{\includegraphics[height = 130 pt, width = 140 pt]{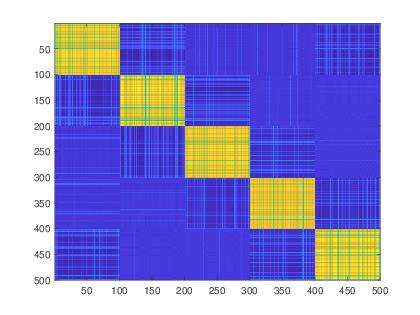}}
\caption{Visualization of learned graph on synthetic datasets.}
\label{our:synthetic}
}
\end{center}
\end{figure*}

\begin{figure*}[t]
\centering
\includegraphics[width=1\linewidth]{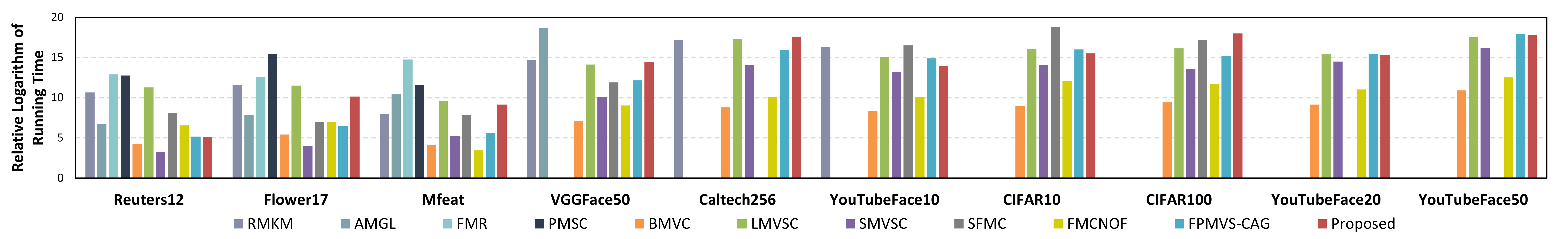}
\caption{Time comparison of different MVC Methods on ten datasets}
\label{time}
\end{figure*}

\begin{figure*}[t]
\begin{center}{
    \centering  
    \subfloat{
            \includegraphics[width=1\linewidth]{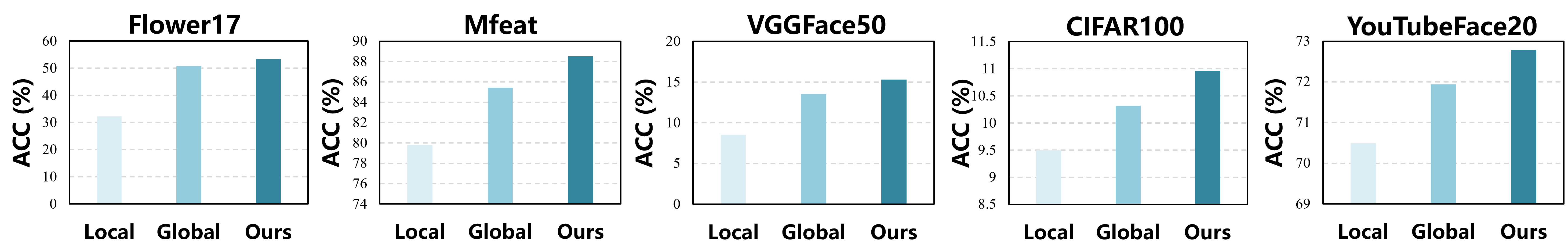}
    }
    }
    \end{center}
\caption{The ablation study of our local and global structure combination strategy on five benchmark datasets.}
\label{abla_lg}
\end{figure*}

\begin{figure*}[t]
\begin{center}{
    \centering  
    \subfloat{
            \includegraphics[width=1\linewidth]{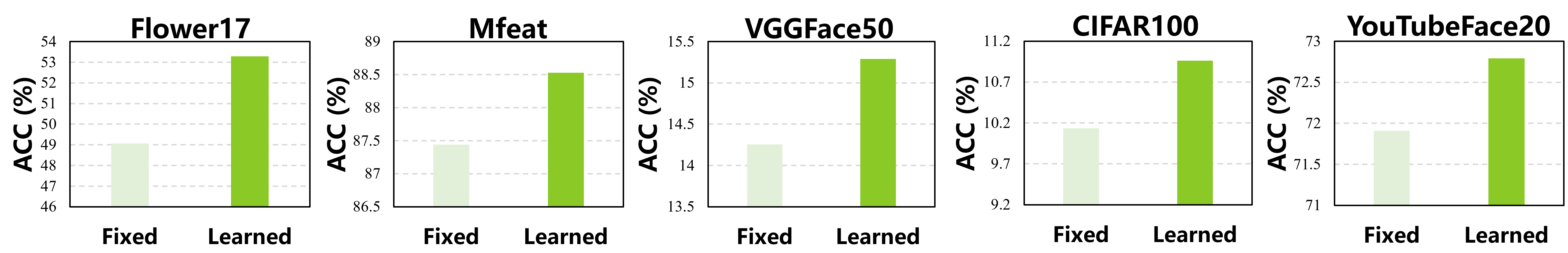}
    }
    }
    \end{center}
\caption{The ablation study of our anchor learning strategy on five benchmark datasets. "Fixed" indicates without our anchor learning strategy.}
\label{abla_anchor}
\end{figure*}
\subsection{Compared Methods}
Along with our proposed EMVGC-LG,  we run ten state-of-the-art multi-view graph clustering methods for comparison, including  Multi-view k-means Clustering on Big Data (RMKM) \cite{cai2013multi},  Parameter-free Auto-weighted Multiple Graph Learning (AMGL) \cite{nie2016parameter}, Flexible Multi-View Representation Learning for Subspace Clustering (FMR) \cite{li2019flexible},
 Partition Level Multi-view Subspace Clustering (PMSC) \cite{kang2020partition},
 Binary Multi-View Clustering (BMVC)  \cite{zhang2018binary},
Large-scale Multi-view Subspace
Clustering in Linear Time (LMVSC)\cite{kang2020large},
Scalable Multiview Subspace Clustering with Unified Anchors (SMVSC) \cite{sun2021scalable},
Multi-view clustering: a Scalable and Parameter-free Bipartite Graph Fusion Method (SFMC) \cite{li2020multi},
Fast Multiview Clustering via Nonnegative and Orthogonal Factorization
 (FMCNOF)\cite{yang2020fast2},
and Fast Parameter-free Multiview Subspace Clustering with Consensus Anchor Guidance (FPMVS-CAG)\cite{wang2022highly}.

For all the aforementioned algorithms, we set their parameters
as their recommended range. In the proposed method, we adjusted $\lambda$ to $[10^{-3},10^{-2},10^{-1},1]$, $\mu$ to
$[0, 10^{-4},1,10^4]$, and the anchor numbers to [k, 2k, 5k], with a grid search scheme. In addition, we repeated each experiment 10 times to calculate the mean performance and standard deviation. To assess the clustering performance, we use three widely used metrics, consisting of Accuracy (ACC), Normalised Mutual Information (NMI), and Fscore. All experiments were conducted on a desktop computer with Intel core i9-10900X CPU and 64G RAM, MATLAB 2020b (64-bit).

\subsection{Experimental Results}

Table \ref{results} indicates the clustering performance on ten datasets. The best results are marked in red, and the second-best results are marked in blue. "O/M" represents the unavailable results due to time-out or out-of-memory errors. 
According to the results, we have the following observation:

\begin{enumerate}[(1)]
    \item Compared with existing MVC methods, our proposed algorithm achieves the best or second-best performance on ten datasets. In comparison with the second-best ones, our EMVGC-LG acquires 5.05\%, 16.16\%, 5.67\%, 1.93\%, 1.16\%, 4.67\%, 2.03\%, 1.43\%, 5.53\% and 2.2\% in terms of ACC, which demonstrates local and global structure fusion strategy. In other metrics, EMVGC-LG also achieved desirable performance.
    \item Comparison with classical MVGC methods, i.e., RMKM, AMGL, FMR, and PMSC, encounter scalability problems on large-scale datasets due to the huge matrix computation and memory generated by the full graph construction. Our EMVGC-LG outperforms them by 8.39\%, 19.85\%, 7.73\%, 7.06\%, 1.87\%, and 4.67\% of ACC on six datasets, showing the superiority of our anchor-based algorithm.
    \item Comparison with existing AMVGC methods, i.e., BMVC, LMVSC, SMVSC, SFMC, FMCNOF, and FPMVS-CAG, our EMVGC-LG still acquires comparable or better performance. In particular, the LMVSC method shows better results than other methods, which demonstrates its superiority in large-scale scenarios. In terms of ACC, our EMVGC-LG achieves better performance than LMVSC by large margins of 12.82\%, 16.16\%, 5.67\%, 4.73\%, 2.13\%, 5.07\%, 2.12\%, 1.43\%, 5.53\%, and 2.20\%, illustrating our effectiveness.
\end{enumerate}

\begin{figure}[t]
\centering    
\begin{center}{
    \centering
     \begin{minipage}[t]{1\linewidth}
        \centering
        \subfloat{
            \includegraphics[width=1\linewidth]{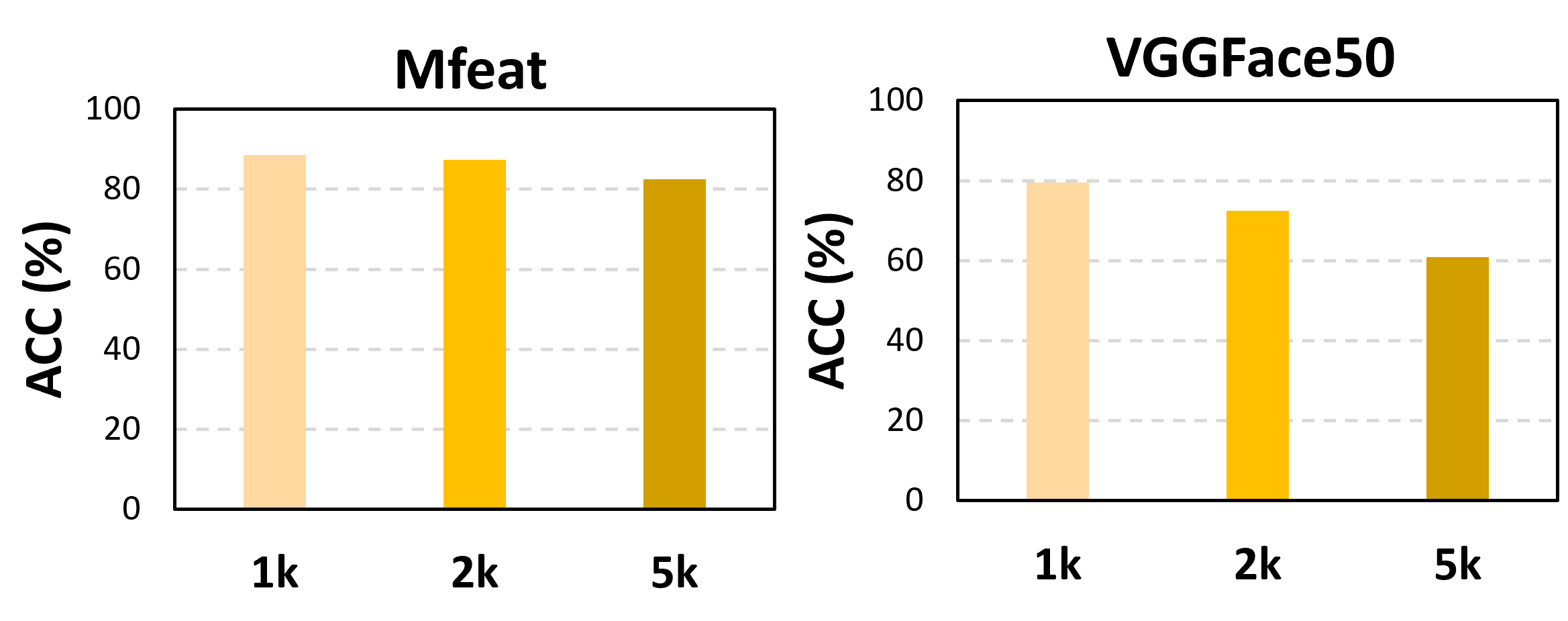}
        }
    \end{minipage}}
    \end{center}     
\caption{Sensitivity analysis of anchor number m of our method on two benchmark datasets.}     
\label{ablation}     
\end{figure}

\subsection{Running Time Comparison}
To validate the computational efficiency of the proposed EMVGC-LG, we plot the average running time of each algorithm on ten benchmark datasets in Figure \ref{time}. The results of some compared algorithms on large-scale datasets are not reported due to memory overflow errors caused by their excessive time and space complexity. As shown in the Figure \ref{time}, we can observe that
\begin{enumerate} [(1)]
    \item Compared to full graph-based clustering methods, the proposed EMVGC-LG significantly reduces run time through the construction of anchor graphs.
    \item Compared to the anchor-based MVC approach, i.e., SMVSC and FPMVS-CAG, the proposed EMVGC-LG requires more time consumption, mainly due to our local and global structure preservation strategy. Generally, the extra time spent is worthwhile since our proposed EMVGC-LG demonstrates its superiority in most datasets.
\end{enumerate}

\subsection{Ablation Study}

\subsubsection{Local and Global Structure Combination Strategy}
The local and global structure combination strategy is the main contribution of this paper. To further demonstrate the effectiveness of this strategy, we present the experimental results of the ablation study in Figure \ref{abla_lg}, where "Local" and "Global" indicates only using the local or global structure, respectively. In our experimental setting, we optimize the Eq.\eqref{local} and Eq.\eqref{global} in the optimization process to obtain the final clustering result. In terms of ACC, the proposed structure combination strategy improves the algorithm performance on the Flower17, Mfeat, VGGFace50, Cifar100, and YouTubeFace20 datasets by \textbf{21.03\%,8.71\%, 6.74\%, 1.46\%}, and \textbf{2.30}\% compared to the simple local structure respectively, which demonstrates the effectiveness of our strategy. 

\begin{figure}[t]
\begin{center}{
    \centering
        \subfloat{
            \includegraphics[width=1\linewidth]{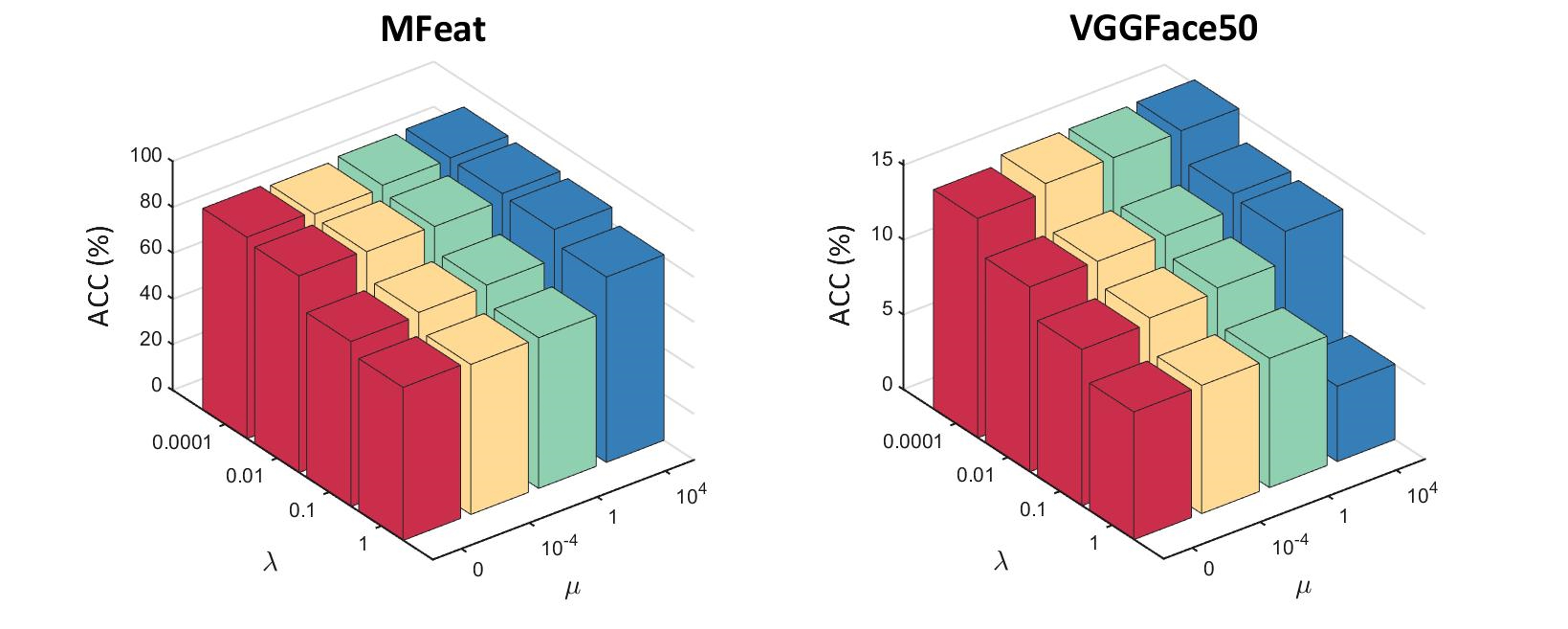}}
    }
    \end{center}
\caption{Sensitivity analysis of $\lambda$ and $\mu$ of our method on two benchmark datasets.}
\label{ablation2}
\end{figure}

\subsubsection{Anchor Learning Strategy}
We conducted ablation experiments with the proposed anchor learning strategy, as shown in Figure \ref{abla_anchor}. "Fixed" indicates initializing anchors by k-means without updating during the optimization process. Compared to the above methods, our approach significantly improves the clustering performance and avoids the high time expenditure of k-means, demonstrating the effectiveness of the anchor learning strategy.

\subsection{Convergence and Sensitivity}
We conducted several experiments to demonstrate the convergence of the proposed algorithm. As shown in Figure \ref{obection}, the objective value of our algorithm is monotonically decreasing in each iteration. These results clearly verify the convergence of our proposed algorithm. 
\begin{figure}
\begin{center}{
    \centering
     \begin{minipage}[t]{1\linewidth}
        \centering
        \subfloat{
            \includegraphics[width=1\linewidth]{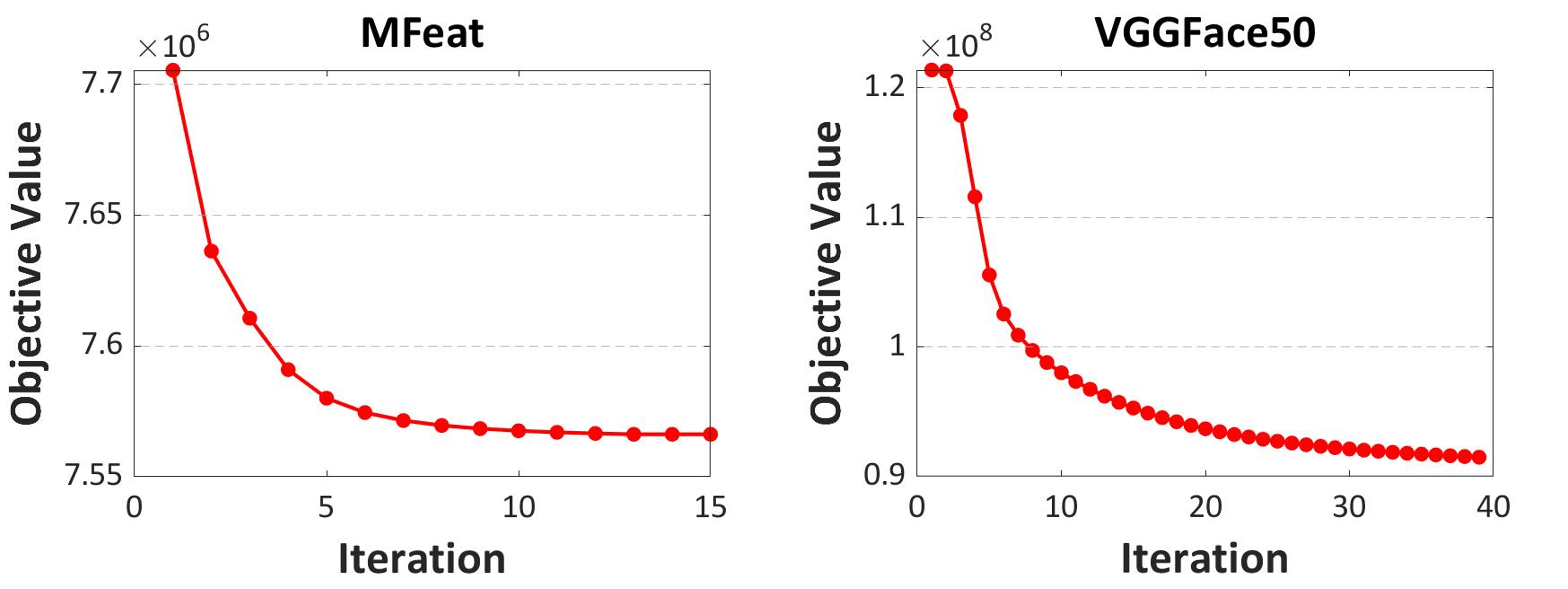}
        }
    \end{minipage}}
    \end{center}
\caption{Objective values of the proposed method on two benchmark datasets.}
\label{obection}
\end{figure}
To investigate the sensitivity of our algorithm to the number of anchors m, we investigated how our performance shifts for different numbers of anchors. As shown in Figure \ref{ablation}, the number of anchors has some effect on the performance of our algorithm and basically achieves optimal performance at m = k. Moreover, two hyperparameters, $\lambda$, and $\mu$, are used in our method, $\lambda$ is the local and global structure balanced parameter, and $\mu$ is the coefficient of the sparsity regularization term. As is shown in Figure \ref{ablation2}, we conducted comparative experiments on two benchmark datasets to illustrate the impact of these two parameters on performance. In terms of VGGFace50, our method performs better when $\lambda$ is less than 0.001, and $\mu$ is larger than 1. When the value of $\lambda$ is larger than 0.01, EMVGC-LG works well on the Mfeat dataset, and $\mu$ has little effect on it. Thus, with fixed $\lambda$, the variation of $\mu$ has a smaller effect on the final performance on most datasets, while ACC with the same $\mu$ will be affected by $\lambda$.

\section{Conclusion}
In this paper, we propose a novel anchor-based multi-view graph clustering framework termed Efficient Multi-View Graph Clustering with Local and Global Structure Preservation (EMVGC-LG). Specifically, EMVGC-LG considers preserving the local and global structures in a unified framework, which provides comprehensive information for clustering. We theoretically prove that the proposed paradigm with the global structure can well approximate the local information. Besides, anchor construction and graph learning are jointly optimized in our unified framework to enhance the clustering quality. In addition, EMVGC-LG inherits the linear complexity of existing AMVGC methods respecting the sample number, which is time-economical and scales well with the data size. Extensive experiments demonstrate the effectiveness and efficiency of our proposed method. In the future, we will explore the relationship between local and global structure in more detail, e.g., under what circumstances is local structure preferable to global structure?

\section{ACKNOWLEDGMENTS}
This work was supported by the National Key R\&D Program of China (no. 2020AAA0107100) and the National Natural Science Foundation of China (project no. 62325604, 62276271).

\bibliographystyle{ACM-Reference-Format}
\balance
\bibliography{sample-base}
\end{document}